\renewcommand{\labelenumi}{(\alph{enumi})}
\renewcommand\theenumi\labelenumi
\newtheorem{theorem}{Theorem}
\newtheorem{lemma}[theorem]{Lemma}
\newtheorem{corollary}[theorem]{Corollary}
\newcommand{\om}{\textsc{OneMax}\xspace}
\newcommand{\onemax}{\om}
\newcommand{\lo}{\textsc{LeadingOnes}\xspace}
\newcommand{\leadingones}{\lo}
\newcommand{\R}{\ensuremath{\mathbb{R}}}
\newcommand{\N}{\ensuremath{\mathbb{N}}} 
\DeclareMathOperator{\Bin}{Bin}
\date{}
\newcommand{\Var}{\mathrm{Var}\xspace} 
\begin{document}
%
%
%
%

\title{Sharp Bounds for Genetic Drift in Estimation of Distribution Algorithms\thanks{A small subset of the results presented in this work were already stated, without proof or proof idea, in the conference paper~\cite[Theorem~4.5]{ZhengYD18}, namely that the expected time the frequency of a neutral bit takes to hit the absorbing states $0$ or $1$ is $\Theta(K^2)$ for cGA and $\Theta(\mu)$ for UMDA. Our work now extends the UMDA result to the PBIL, strenghthens all lower bounds by regarding the event of leaving the middle range $[\frac 14,\frac 34]$ of the frequency range, adds a tail bound for the lower bounds, and adds domination arguments allowing to extend the lower bounds to bits that are neutral or prefer a particular value. Also, complete proofs are given for all results. Both authors contributed equally to this work and both act as corresponding authors. }}

\author{Benjamin Doerr\\ Laboratoire d'Informatique (LIX)\\ CNRS\\ \'Ecole Polytechnique\\ Institute Polytechnique de Paris\\ Palaiseau, France
\and Weijie Zheng\\ Shenzhen Key Laboratory of Computational Intelligence\\ University Key Laboratory of Evolving Intelligent Systems of Guangdong Province\\ Department of Computer Science and Engineering \\ Southern University of Science and Technology \\ Shenzhen, China}

\maketitle

\begin{abstract}
Estimation of Distribution Algorithms (EDAs) are one branch of Evolutionary Algorithms (EAs) in the broad sense that they evolve a probabilistic model instead of a population. Many existing algorithms fall into this category. Analogous to genetic drift in EAs, EDAs also encounter the phenomenon that updates of the probabilistic model not justified by the fitness move the sampling frequencies to the boundary values. This can result in a considerable performance loss. 

This paper proves the first sharp estimates of the boundary hitting time of the sampling frequency of a neutral bit for several univariate EDAs. For the UMDA that selects $\mu$ best individuals from $\lambda$ offspring each generation, we prove that the expected first iteration when the frequency of the neutral bit leaves the middle range $[\tfrac 14, \tfrac 34]$ and the expected first time it is absorbed in 0 or 1 are both $\Theta(\mu)$. The corresponding hitting times are $\Theta(K^2)$ for the cGA with hypothetical population size $K$. This paper further proves that for PBIL with parameters $\mu$, $\lambda$, and $\rho$, in an expected number of $\Theta(\mu/\rho^2)$ iterations the sampling frequency of a neutral bit leaves the interval $[\Theta(\rho/\mu),1-\Theta(\rho/\mu)]$ and then always the same value is sampled for this bit, that is, the frequency approaches the corresponding boundary value with maximum speed.

For the lower bounds implicit in these statements, we also show exponential tail bounds. If a bit is not neutral, but neutral or has a preference for ones, then the lower bounds on the times to reach a low frequency value still hold. An analogous statement holds for bits that are neutral or prefer the value zero. 
%
\end{abstract}

\section{Introduction}
Estimation of Distribution Algorithms (EDAs) are evolutionary algorithms (EAs) that evolve a probabilistic model instead of a population. An iteration of an EDA usually consists of three steps. (i)~Based on the current probabilistic model, a population of individuals is sampled. (ii)~The fitness of this population is determined. (iii)~Update of the probabilistic model: Based on the fitness of this population and the probabilistic model, a new probabilistic model is computed.

Different probabilistic models and update strategies form different specific algorithms in this framework. In multivariate EDAs, the probabilistic model contains dependencies among the variables. Examples for multivariate EDAs include Mutual-Information-Maximization Input Clustering~\cite{BonetIV96}, Bivariate Marginal Distribution Algorithm~\cite{PelikanM99}, the Factorized Distribution Algorithm~\cite{MuhlenbeinM99}, the Extended Compact Genetic Algorithm~\cite{HarikLS06}, and many others. 

For univariate EDAs, the bit positions of the probabilistic model are mutually independent. Univariate EDAs include Population-Based Incremental Learning (PBIL)~\cite{Baluja94,BalujaC95} 
with special cases Univariate Marginal Distribution Algorithm (UMDA)~\cite{MuhlenbeinP96} and Max-Min Ant System with iteration-best update (MMAS$_{ib}$)~\cite{NeumannSW10}, and the Compact Genetic Algorithm (cGA)~\cite{HarikLG98}. Since the dependencies in multivariate EDAs bear significant difficulties for a mathematical analysis, almost all theoretical results for EDAs regard univariate models~\cite{KrejcaW20bookchapter}. This paper also deals exclusively with univariate EDAs. 

In evolutionary algorithms, it is known that the frequencies of bit values in the population are not only influenced by the contribution of the bit to the fitness, but also by random fluctuation stemming from other bits having a stronger influence on the fitness. These random fluctuations can even lead to certain bits converging to a single value different from the one in the optimal solution. This effect is called \emph{genetic drift}~\cite{Kimura64,AsohM94}. 

Genetic drift also happens in EDAs. Gonz{\'a}lez, Lozano, and Larra{\~n}aga~\cite{GonzalezLL01} showed that for the $2$-dimensional \onemax function, the sampling frequency of PBIL can converge to any search point in the search space with probability near to $1$ if the initial sampling frequency goes to that search point and the learning rate goes to~1. Droste~\cite{Droste06} noticed the possibility of the cGA getting stuck, but he only analyzed the runtime conditional on it being finite, and no analysis of genetic drift or stagnation times was given. Costa, Jones, and Kroese~\cite{CostaJK07} proved that a constant smoothing parameter for the Cross Entropy (CE) algorithm (which is equivalent to a constant learning rate $\rho$ for PBIL) results in that the probability mass function converges to a unit mass at some random candidate, but no convergence speed analysis was given. In summary, as Krejca and Witt said in~\cite{KrejcaW20bookchapter}, the genetic drift in EDAs is a general problem of martingales, that is, that a random process with zero expected change will eventually stop at the absorbing boundaries of the range. Witt~\cite{Witt19} and Lengler, Sudholt, and Witt~\cite{LenglerSW18} recently showed that genetic drift can result in a considerable performance loss on the \onemax function.

In this work, we shall quantify this effect asymptotically precisely for several EDAs and this via proven results. The few \textbf{previous works} in this direction have obtained the following results. Friedrich, K\"otzing, and Krejca~\cite{FriedrichKK16} showed that for the cGA with hypothetical population size $K$, the frequency of a neutral bit position is arbitrary close to the borders $0$ or $1$ after expected $\omega(K^2)$ generations. Though not stated in~\cite{FriedrichKK16}, from its Corollary~9, we can derive an upper bound of $O(K^2)$ for the expected time of leaving the interval $[\frac 14, \frac 34]$, and $O(K^2 \log K)$ for expected hitting time of a boundary value. For the UMDA selecting $\mu$ best individuals from $\lambda$ offspring, the situation is similar~\cite{FriedrichKK16}. After $\omega(\mu)$ iterations, the frequencies are arbitrary close to the boundaries and the expected hitting time can be shown to be $O(\mu \log \mu)$ via similar arguments as above. Sudholt and Witt~\cite{SudholtW19} mentioned that the boundary hitting time of the cGA is $\Theta(K^2)$, but without a complete proof (in particular, because they did not discuss what happens once the frequency leaves the interval $[\frac 16, \frac 56]$). Although Krejca and Witt~\cite{KrejcaW20} focused on the lower bound of the runtime of the UMDA on \onemax, we can derive from it that the hitting time of the boundary $0$ is at least $\Omega(\mu)$. This follows from the drift of $\phi$ in Lemma~9 in~\cite{KrejcaW20} together with the additive drift theorem~\cite{HeY01}. 

\textbf{Our results:} While the results above give some indication on the degree of stability of PBIL and the cGA, a sharp proven result is still missing. This paper overcomes this shortage and gives precise asymptotic hitting times for PBIL (including the UMDA and the MMAS$_{ib}$) and the cGA. With a simultaneous analysis of the UMDA and the cGA, we prove that for the UMDA selecting $\mu$ best individuals from $\lambda$ offspring on some $D$-dimensional problem, the expected number of iterations until the frequency of the neutral bit position is absorbed in 0 or 1 for the UMDA without margins or when the frequency hits the margins $\{1/D,1-1/D\}$ for the UMDA with such margins is $\Theta(\mu)$, and the corresponding hitting time is $\Theta(K^2)$ for the cGA with hypothetical population size $K$. This paper also gives a precise asymptotic analysis for PBIL selecting $\mu$ best individuals from $\lambda$ offspring and with a learning rate of $\rho$: In expectation in $\Theta(\mu/\rho^2)$ generations the sampling frequency of a neutral bit position leaves the interval $[\Theta(\rho/\mu),1-\Theta(\rho/\mu)]$ and then always the same value is sampled for this position.

For the lower bounds implicit in these estimates we prove an exponential tail bound in Theorem~\ref{thm:tail}. 

We also extend the lower bound results to bit positions that are neutral or have a preference for some bit value (Section~\ref{sec:lowerP}). For example, we prove that for PBIL it takes an expected number of $\Omega(\mu/\rho^2)$ iterations until the sampling frequency of a position that is neutral or prefers a one (neutral or prefers a zero) reaches the interval $[0,\tfrac 14]$ ($[\tfrac 34, 1]$). The corresponding hitting time is $\Omega(K^2)$ for the cGA.

The remainder of this paper is organized as follows. Section~\ref{sec:IntroEDA} briefly introduces PBIL and the cGA under the umbrella of the $n$-Bernoulli-$\lambda$-EDA framework proposed in~\cite{FriedrichKK16}. Our notation for our results is fixed in Section~\ref{sec:Notation}. Section~\ref{sec:lower} and Section~\ref{sec:upper} discuss how fast the frequency of a neutral bit position approaches the boundaries. Section~\ref{sec:lowerP} extends the lower bound results of Section~\ref{sec:lower} to bit positions that are neutral or have some preference. Finally, in Section~\ref{sec:disc} we argue how our results allow to interpret existing research results and how they give hints on how to choose the parameters of these EDAs.

\section{The $n$-Bernoulli-$\lambda$-EDA Framework}
\label{sec:IntroEDA}

Since the $n$-Bernoulli-$\lambda$-EDA framework proposed in~\cite{FriedrichKK16} covers many well-known EDAs including PBIL and the cGA, we use it to make precise these two EDAs. 

We note that often \emph{margins} like $1/D$ and $1-1/D$ are used, that is, the frequencies are restricted to stay in the interval $[1/D,1-1/D]$. This prevents the frequencies from reaching the absorbing states $0$ and $1$. To ease the presentation, we regard the EDAs without such margins. We note that, trivially, the time to reach an absorbing state is not smaller than the time to reach a margin value. Hence an upper bound on the hitting time of the absorbing states is also an upper bound for the time to reach or exceed the margin values. Our main result on lower bounds, Corollary~\ref{cor:lower}, shows a lower bound for the time to reach a frequency value in $[0,\frac14] \cup [\frac 34, 1]$. This again is a lower bound for the time to reach (or exceed) the margin values or the absorbing states. 

The $n$-Bernoulli-$\lambda$-EDA framework for maximizing a function $f: \{0,1\}^D \rightarrow \R$ is shown in Algorithm~\ref{alg:EDA}. By suitably specifying the update scheme $\phi$, we derive PBIL and the cGA. The general idea of \emph{population-based incremental learning (PBIL)} is to sample $\lambda$ individuals from the current distribution, select $\mu$ best of them, and use these (with a \emph{learning rate} of $\rho$) and the current distribution to define the new distribution. Formally, the update scheme is
\begin{equation}
\begin{split}
p_j^t={}&\varphi(p^{t-1}, (X_i, f(X_i))_{i=1,\dots,\lambda})_j \\
={}&(1-\rho)p_j^{t-1}+\frac{\rho}{\mu}\sum\limits_{i=1}^{\mu}\tilde{X}_{i,j}^t,
\label{eq:pbilupdate}
\end{split}
\end{equation}
where $\rho$ is the learning rate and $\tilde{X}_1^t,...,\tilde{X}_\mu^t$ are the selected $\mu$ best individuals from the $\lambda$ offspring. 

The \emph{cross entropy} algorithm (CE) has various definitions according to the problems to be solved. The basic CE algorithm for discrete optimization~\cite{CostaJK07} samples $N$ individuals from the current distribution, selects $N_b$ best of them, and uses these (with a time-dependent \emph{smoothing rate} of $\alpha_t$) and the current distribution to define the new distribution. The formal update scheme is (\ref{eq:pbilupdate}) with $\mu$, $\lambda$ and $\rho$ respectively replaced by $N_b$, $N$ and $\alpha_t$. The basic CE is equal to PBIL except that the learning rate is fixed for PBIL, whereas CE utilizes time-dependent learning rates. When referring to the CE algorithm in this paper, we mean this version from~\cite{CostaJK07}, but we denote its parameters by $\mu$, $\lambda$ and $\rho_t$ instead of $N_b$, $N$ and $\alpha_t$ to reflect the similarity with PBIL.

Two special cases of PBIL have been regarded in the literature. The \emph{univariate marginal distribution algorithm (UMDA)} only uses the samples of this current iteration to define the next probabilistic model, hence it is equivalent to PBIL with a learning rate of $\rho=1$. The \emph{$\lambda$-max-min ant system ($\lambda$-MMAS)} only selects the best sampled individual and the current model to construct the new model, hence it is the special case with $\mu=1$. 
\begin{algorithm}[!ht]
\caption{The $n$-Bernoulli-$\lambda$-EDA framework with update scheme $\varphi$ to maximize a function $f: \{0,1\}^D \rightarrow \R$}
{\small
 \begin{algorithmic}[1]
 \STATE{$p^0=(\tfrac{1}{2}, \tfrac{1}{2},\dots,\tfrac{1}{2})\in [0,1]^D$}
 \FOR {$t=1,2,\dots$}
 \FOR {$i=1,2,\dots,\lambda$}
 \STATEx {$\quad\%\%$\textsl{Sampling of the $i$-th individual $X_i^t=(X_{i,1}^t,\dots,X_{i,D}^t)$}}
 \FOR {$j=1,2,\dots,D$}
 \STATE $X_{i,j}^t \leftarrow 1$ with probability $p_{i}^{t-1}$ and
 \STATEx \quad\quad\quad$X_{i,j}^t \leftarrow 0$ with probability $1-p_{i}^{t-1}$; 
 \ENDFOR
 \ENDFOR
 \STATEx {$\quad\%\%$\textsl{Update of the frequency vector}}
 \STATE $p^t\leftarrow\varphi(p^{t-1}, (X_i, f(X_i))_{i=1,\dots,\lambda})$;
 \ENDFOR
 \end{algorithmic}
 \label{alg:EDA}
}
\end{algorithm}

The \emph{compact genetic algorithm (cGA)} with hypothetical population size $K$, not necessarily an integer, samples two individuals and then changes the frequency of each bit position by an absolute value of $1/K$ towards the bit value of the better individual (unless the two sampled individuals have identical values in this position). Formally, we have $\lambda=2$ in the $n$-Bernoulli-$\lambda$-EDA framework and the update scheme is
\begin{equation}
    \begin{split}
    p_j^t={}&\varphi(p^{t-1}, (X_i, f(X_i))_{i=1,\dots,\lambda})_j\\
    ={}& \begin{cases}
    p_j^{t-1}+\tfrac{1}{K}, & \text{if $X_{(1),j}^t>X_{(2),j}^t$}\\
    p_j^{t-1}-\tfrac{1}{K}, & \text{if $X_{(1),j}^t<X_{(2),j}^t$}\\
    p_j^{t-1}, & \text{if $X_{(1),j}^t=X_{(2),j}^t$},\\
    \end{cases}
    \end{split}
    \label{eq:cgaupdate}
\end{equation}
where $\{X_{(1)}^t,X_{(2)}^t\} = \{X_1^t,X_2^t\}$ such that $f(X_{(1)}^t) \ge f(X_{(2)}^t)$. We shall always make the following \emph{well-behaved frequency assumption} (first called so in~\cite{Doerr19gecco}, but made in many earlier works already): Any two frequencies the cGA can reach differ by a multiple of $1/K$. In the case of no margins, this means that the cGA can only use frequencies in $\{0, 1/K, 2/K, \dots, 1\}$. Note that $K$ needs to be even so that the initial frequency $1/2$ is also a multiple of $1/K$. When using the margins $1/D$ and $1- 1/D$, the set of reachable frequency boundaries is $\{1/D, 1/D+1/K, 1/D+2/K, \dots, 1 - 1/D\}$. To have $1/2$ in this set, $1 - 2/D$ needs to be an even multiple of $1/K$.

\section{Notation Used in Our Analyses}
\label{sec:Notation}
Genetic drift is usually studied via the behavior of a neutral bit position. Let $f: \{0,1\}^D \rightarrow \R$ be an arbitrary fitness function with a neutral bit position. Without loss of generality, let the first bit position of the fitness function $f$ be neutral, that is, we have $f(0,X_2,\dots,X_D)=f(1,X_2,\dots,X_D)$ for all $X_2,\dots,X_D\in\{0,1\}$. Then we can simply assume that $\tilde{X}_{i,1}^t=X_{i,1}^t,i=1,\dots,\mu$ in (\ref{eq:pbilupdate}), and $X_{(1),1}^t=X_{1,1}^t, X_{(2),1}^t=X_{2,1}^t$ in (\ref{eq:cgaupdate}).
Let $p_t=p_1^t$ be the frequency of the neutral bit position after generation $t$. For PBIL, we have
\begin{equation}
\begin{split}
p_t=
\begin{cases}
\frac{1}{2}, &t=0,\\
(1-\rho)p_{t-1}+\frac{\rho}{\mu}\sum\limits_{i=1}^{\mu}X_{i,1}^t, &t\geq 1,
\end{cases}
\end{split}
\label{eq:pbil}
\end{equation}
where the $X_{i,1}^t$ are independent $0,1$ random variables with $\Pr[X_{i,1}^t=1]=p_{t-1}$. 

For the cGA, we have
\begin{equation*}
\begin{split}
p_t=
\begin{cases}
\frac{1}{2}, &t=0,\\
 \begin{cases}
    p_{t-1}+\frac{1}{K}, & \text{if $X_{1,1}^t>X_{2,1}^t$}\\
    p_{t-1}-\frac{1}{K}, & \text{if $X_{1,1}^t<X_{2,1}^t$}\\
    p_{t-1}, & \text{if $X_{1,1}^t=X_{2,1}^t$}\\
    \end{cases}
, &t\geq 1,
\end{cases}
\end{split}
\end{equation*}
where $X_{1,1}^t$ and $X_{2,1}^t$ are independent $0,1$ random variables with $ \Pr[X_{1,1}^t=1]= \Pr[X_{2,1}^t=1]=p_{t-1}$. 

We observe that this random process $(p_t)$ is independent of $f,D$, and, in the case of PBIL, $\lambda$. We also have
\begin{equation*}
E[p_t\mid p_{t-1}]=p_{t-1},
\end{equation*}
that is, both PBIL and the cGA are balanced in the sense of~\cite{FriedrichKK16}. 

Finally, let  $T=\min\{t\mid p_t \in \{0,1\}\}$ be the hitting time of the absorbing states 0 and 1.

We are now ready to prove our matching upper and lower bounds for the hitting time $T$. We start with lower bounds in Section~\ref{sec:lower} as these are easier to prove and thus a good warm-up for the upper bound proofs in Section~\ref{sec:upper}.

\section{Lower Bounds on the Boundary Hitting Time}
\label{sec:lower}

To prove our lower bounds, we use the following version of the Hoeffding-Azuma inequality for maxima, see ~\cite[Theorem 3.10 and (41)]{McDiarmid98} and note that in (41) the absolute value should be inside the maximum, as can be seen from the proof in~\cite{McDiarmid98}.
\begin{theorem}[\cite{McDiarmid98}]
Let $a_1,\dots,a_m \in \R$, and $S_1,\dots,S_m$ be a martingale difference sequence with $|S_k| \le a_k$ for each $k$. Then for any $s \ge 0$,
\begin{align*}
\Pr\left[\max\limits_{k=1,\dots,m} \left|\sum_{i=1}^k S_i\right| \ge s\right]\le 2\exp\left(-\frac{s^2}{2\sum_{i=1}^m a_i^2}\right).
\end{align*}
\label{thm:HoeffdingAzuma}
\end{theorem}

Now we prove our lower bounds. We first derive tail bounds in Theorem~\ref{thm:tail}, and use the tail bounds to further obtain our lower bounds on the expected hitting time of the absorbing states. The expectations of hitting times are asymptotically equal to (and necessarily not less than) the expected times of leaving the frequency range $(\frac 14, \frac 34)$, so we determine these in Corollary~\ref{cor:lower}, which are also of independent interest.

\begin{theorem}\label{thm:tail}
  Consider using an $n$-Bernoulli-$\lambda$-EDA to optimize some function $f$ with a neutral bit position. Let $p_t, t = 0, 1, 2, \dots$ denote the frequency of the neutral bit position after iteration $t$. 
  \begin{enumerate}
  \item If the EDA is PBIL with learning rate $\rho$ and selection size $\mu$, then for all $\gamma > 0$ and $T \in \N$ we have
  \[\Pr[\forall t \in [0..T] : |p_t - \tfrac 12| < \gamma] \ge 1 - 2 \exp\left({-\frac{\gamma^2 \mu}{2\rho^2 T}}\right).\]
  \item If the EDA is the cGA with hypothetical population size $K$, then for all $\gamma > 0$ and $T \in \N$ we have
  \[\Pr[\forall t \in [0..T] : |p_t - \tfrac 12| < \gamma] \ge 1 - 2 \exp\left({-\frac{\gamma^2 K^2}{2 T}}\right).\]
  \end{enumerate} 
\end{theorem}

\begin{proof}
For PBIL, building on the notation introduced in Section~\ref{sec:Notation}, we consider the random process
\begin{equation*}
Z_{t\mu+a}=(1-\rho)p_t\mu+\rho p_{t}(\mu-a)+\rho\sum\limits_{i=1}^aX_{i,1}^{t+1},
\end{equation*}
where $t=0,1,\dots$, and $a=0,1,\dots,\mu-1$. For $a=0$, we obviously have $Z_{t\mu}/{\mu}=p_{t}$, that is, the $Z$-process contains the process $(p_t)$ we are interested in. 
Noting that $Z_{(t+1)\mu}$ can also be written as $Z_{t\mu+\mu}=(1-\rho)p_t\mu+\rho p_{t}(\mu-\mu)+\rho\sum\limits_{i=1}^{\mu}X_{i,1}^{t+1}$, it is also not difficult to see that for all $k=0,1,\dots$, we have
\begin{equation}
\begin{split}
\Pr[Z_{k+1}={}&Z_k+\rho-\rho p_t\mid Z_{1},\dots,Z_{k}]=p_t,\\
\Pr[Z_{k+1}={}&Z_k+0-\rho p_t\mid Z_1,\dots,Z_k]=1-p_t,
\end{split}
\label{eq:nprob}
\end{equation}
where $t = \lfloor k/\mu \rfloor$. 
Consequently, 
\begin{equation*}
E[Z_{k+1}\mid Z_1,\dots,Z_k]=Z_k
\end{equation*}
and the sequence $Z_0,Z_1,Z_2,\dots$ is a martingale. For $k=1,2,\dots$, let $R_k=Z_k-Z_{k-1}$ define the martingale difference sequence. By (\ref{eq:nprob}),
\begin{equation*}
|R_k|\le \max\{\rho(1-p_t),\rho p_t\} \le \rho.
\end{equation*}
By the Hoeffding-Azuma inequality (Theorem~\ref{thm:HoeffdingAzuma}), we have
\begin{equation}
\Pr \left[ \max\limits_{k=1,\dots,t\mu}\left|\sum\limits_{i=1}^k{R_i} \right|\ge M \right]\le 2\exp \left({-\frac{M^2}{2t\mu\rho^2}} \right).
\label{eq:azumaMM}
\end{equation}
Recalling $Z_0 = \frac {\mu}2$ and $p_t = Z_{t\mu}/\mu$, we have 
\begin{equation}
\begin{split}
\Pr {}&\left[ \max \limits_{k=1,\dots,t} \left| p_k - \tfrac 12  \right| \ge M/\mu \right]\\
\le {}&
\Pr \left[ \max\limits_{k=1,\dots,t\mu} \left|\sum\limits_{i=1}^k{R_i} \right|\ge M \right].\\
\end{split}
\label{eq:pZ}
\end{equation}
Combining \eqref{eq:azumaMM} and \eqref{eq:pZ} with $M=\gamma \mu$, we obtain
\[\Pr \left[ \max\limits_{k=1,\dots,t} \left| p_k - \tfrac 12  \right| \ge \gamma \right]
\le 2 \exp\left({-\frac{\gamma^2}{2t\rho^2}}\right)\]
and thus we prove the result for PBIL.

For the cGA, we may simply regard the process $Z_k=p_k$. Since for all $k=0,1,\dots$, 
\begin{align*}
\Pr[Z_{k+1}={}&Z_k+\tfrac{1}{K}\mid Z_{1},\dots,Z_{k}]=p_k(1-p_k),\\
\Pr[Z_{k+1}={}&Z_k-\tfrac{1}{K}\mid Z_1,\dots,Z_k]=p_k(1-p_k),\\
\Pr[Z_{k+1}={}&Z_k\mid Z_1,\dots,Z_k]=1-2p_k(1-p_k),
\end{align*}
we have $E [Z_{k+1}\mid Z_1,\dots,Z_k ]=Z_k$. The martingale difference sequence $R_k:=Z_k-Z_{k-1}$ satisfies $|R_k|\le\tfrac{1}{K}$. By the Hoeffding-Azuma inequality, we have
\begin{align*}
\Pr {}&\left[ \max\limits_{k=1,\dots,t} \left|p_k - \tfrac 12\right | \ge M \right] \\
={}& \Pr \left[ \max\limits_{k=1,\dots,t} \left|\sum\limits_{i=1}^k{R_i} \right | \ge M \right]
\le 2 \exp \left({-\frac{M^2K^2}{2t}} \right).
\end{align*}
Taking $M=\gamma$ we prove our result for the cGA.
\end{proof}

Let $T_0$ the first time the frequency of the neutral bit position is in $[0,\frac 14] \cup [\frac 34, 1]$. Then we know $T_0=\min \{t \mid |p_t-\tfrac{1}{2} | \ge \tfrac{1}{4} \}$. Hence, via taking $T=\mu/(32\rho^2)$ for PBIL and $T=K^2/32$ for the cGA in Theorem~\ref{thm:tail}, we could easily obtain the expected hitting time, as shown in Corollary~\ref{cor:lower}.
\begin{corollary}
Consider using an $n$-Bernoulli-$\lambda$-EDA to optimize some function $f$ with a neutral bit position. Let $T_0$ denote the first time the frequency of the neutral bit position is in $[0,\frac 14] \cup [\frac 34, 1]$. For PBIL, we have $E[T_0]=\Omega(\tfrac{\mu}{\rho^2})$, in particular, $E [T_0]=\Omega({\mu})$ for the UMDA and $E [T_0 ]=\Omega(\tfrac{1}{\rho^2})$ for the $\lambda$-MMAS. For the cGA, we have $E[T_0]=\Omega(K^2)$. 
\label{cor:lower}
\end{corollary}

We note that the lower bound proof for PBIL can be extended to CE, either by simply replacing $\rho$ by the  supremum $\rho_{\sup} = \sup\{\rho_t \mid t \in \N\}$ and obtaining a lower bound of $\Omega(\mu/\rho_{\sup}^2)$, or by replacing $t\rho^2$ in~\eqref{eq:azumaMM} by $\sum_{s=1}^t \rho_t^2$. With a suitable choice of $t$, this gives a bound taking into account the particular values of $(\rho_t)$. We omit the details.



\section{Upper Bounds on the Boundary Hitting Time}
\label{sec:upper}

We now prove that, roughly speaking, the lower bounds shown in the previous section are asymptotically tight. 
To prove our upper bounds, we use the following two lemmas.
\begin{lemma}
For all $z\ge 0$ and $z_0>0$, we have
\begin{equation*}
\begin{split}
\sqrt z\leq {}& \sqrt z_0+\tfrac{1}{2}z_0^{-1/2}(z-z_0)-\tfrac{1}{8}z_0^{-3/2}(z-z_0)^2\\
{}&+\tfrac{1}{16}z_0^{-5/2}(z-z_0)^3.
\end{split}
\end{equation*}
\label{lem:z}
\end{lemma}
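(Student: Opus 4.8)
The statement is a one-variable polynomial inequality comparing $\sqrt z$ with its degree-3 Taylor-type expansion around $z_0$, and the natural approach is to reduce it to a statement about a single dimensionless variable and then to elementary calculus. First I would substitute $z = z_0(1+u)$ with $u \ge -1$ (legitimate since $z_0>0$), so that $\sqrt z = \sqrt{z_0}\sqrt{1+u}$ and each term on the right-hand side carries a factor $\sqrt{z_0}$: the claimed inequality becomes, after dividing by $\sqrt{z_0}>0$,
\begin{equation*}
\sqrt{1+u} \le 1 + \tfrac12 u - \tfrac18 u^2 + \tfrac1{16} u^3 \qquad \text{for all } u \ge -1.
\end{equation*}
Thus it suffices to prove this scalar inequality; all dependence on $z_0$ has disappeared.

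Next I would define $g(u) = 1 + \tfrac12 u - \tfrac18 u^2 + \tfrac1{16}u^3 - \sqrt{1+u}$ on $[-1,\infty)$ and show $g(u)\ge 0$. The plan is to check $g(-1) = 1 - \tfrac12 - \tfrac18 - \tfrac1{16} = \tfrac{5}{16} > 0$ and $g(0)=0$, and then study $g'$. We have $g'(u) = \tfrac12 - \tfrac14 u + \tfrac{3}{16}u^2 - \tfrac12(1+u)^{-1/2}$. Since $g(0)=0$, it would be cleanest to show that $u=0$ is the global minimum on $[-1,\infty)$, or at least that $g$ does not dip below $0$; one route is to write $g'(u) = \tfrac12\bigl(1 - (1+u)^{-1/2}\bigr) - \tfrac14 u + \tfrac{3}{16}u^2$ and note $1-(1+u)^{-1/2}$ has the same sign as $u$, so for $u\ge 0$ all three pieces except possibly $-\tfrac14 u$ are nonnegative; a slightly more careful bound $(1+u)^{-1/2}\le 1-\tfrac12 u+\tfrac38 u^2$ for $u\ge 0$ (itself a routine Taylor bound) turns $g'(u)$ into something manifestly nonnegative for $u\ge 0$, giving $g(u)\ge g(0)=0$ there. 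For $-1\le u\le 0$ one argues symmetrically that $g$ is decreasing, so $g(u)\ge g(0)=0$ on that side too, combined with the endpoint value $g(-1)>0$ as a sanity check.

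Alternatively, and perhaps more slickly, one can avoid calculus entirely: multiply through by the positive quantity and reduce to showing that the polynomial $P(u) = \bigl(1+\tfrac12 u-\tfrac18 u^2+\tfrac1{16}u^3\bigr)^2 - (1+u)$ is nonnegative on $[-1,\infty)$ \emph{and} that the base $1+\tfrac12 u-\tfrac18 u^2+\tfrac1{16}u^3$ is itself nonnegative there (so that squaring preserves the inequality direction). Expanding $P$ gives a degree-6 polynomial with a double root at $u=0$; factoring out $u^2$ leaves a quartic that should be checkable as a sum of squares or via its (nonnegative) discriminant pattern. I expect the main obstacle to be precisely this bookkeeping step — verifying nonnegativity of the leftover quartic cleanly, and simultaneously confirming the cubic base stays nonnegative on all of $[-1,\infty)$ (it tends to $+\infty$ and one must rule out a negative excursion for moderately large $u$, e.g. by checking its own discriminant or by the crude bound $\tfrac1{16}u^3 - \tfrac18 u^2 \ge -\tfrac{1}{12}$ or similar). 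Neither obstacle is deep, but both require care, so in the write-up I would lead with the substitution, then pick whichever of the two finishing arguments (monotonicity of $g$, or the squared-polynomial factorization) produces the shortest honest calculation.
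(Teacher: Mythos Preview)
Your proposal is correct, and both of the finishing arguments you sketch can be made to work, but the route differs from the paper's. The paper substitutes $x=\sqrt z$, $a=\sqrt{z_0}$ (rather than your $u=z/z_0-1$), sets
\[
g(x)=x-a-\tfrac12 a^{-1}(x^2-a^2)+\tfrac18 a^{-3}(x^2-a^2)^2-\tfrac{1}{16}a^{-5}(x^2-a^2)^3,
\]
and then simply computes
\[
g''(x)=-\tfrac{15}{8}a^{-5}(x^2-a^2)^2\le 0.
\]
Concavity of $g$ together with $g'(a)=0$ and $g(a)=0$ immediately gives $g\le 0$ on $[0,\infty)$. The key advantage of the paper's substitution is that the second derivative collapses to a manifest negative square with no further case analysis; your first approach instead reduces the question to a lower-order Taylor bound (which works via the Lagrange remainder but is a small detour), and your second approach requires expanding and factoring the degree-six polynomial $P(u)$. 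For the record, that factoring does come out cleanly: one gets $P(u)=\tfrac{1}{256}u^4\bigl((u-2)^2+16\bigr)\ge 0$, and the cubic base is monotone increasing since its derivative $\tfrac12-\tfrac14 u+\tfrac{3}{16}u^2$ has negative discriminant, so both of your anticipated obstacles dissolve. Either of your two routes yields a valid proof; the paper's is slightly shorter because the second-derivative computation does all the work in one line.
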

\begin{proof}
For the convenience of the proof, let $x=\sqrt z$ and $a=\sqrt {z_0}$. We consider the function
\begin{equation*}
\begin{split}
g(x)={}&x-a-\tfrac{1}{2}a^{-1}(x^2-a^2)+\tfrac{1}{8}a^{-3}(x^2-a^2)^2\\
{}&-\tfrac{1}{16}a^{-5}(x^2-a^2)^3\\
={}&{-\tfrac{1}{16}}a^{-5}x^6+\tfrac{5}{16}a^{-3}x^4-\tfrac{15}{16}a^{-1}x^2+x-\tfrac{5}{16}a
\end{split}
\end{equation*}
and show that $g(x) \le 0$. Since
\begin{equation*}
g'(x)={-\tfrac{3}{8}}a^{-5}x^5+\tfrac{5}{4}a^{-3}x^3-\tfrac{15}{8}a^{-1}x+1
\end{equation*}
and
\begin{equation*}
\begin{split}
g''(x)={}&{-\tfrac{15}{8}}a^{-5}x^4+\tfrac{15}{4}a^{-3}x^2-\tfrac{15}{8}a^{-1}\\
={}&{-\tfrac{15}{8}}a^{-5}(x^4-2a^2x^2+a^4)\\
={}&{-\tfrac{15}{8}}a^{-5}(x^2-a^2)^2\le 0,
\end{split}
\end{equation*}
we know that $g'(x)$ is monotonically decreasing. Since $g'(0)=1$ and $g'(a)=0$, we observe that $g(x)$ increases in $[0,a)$ and decreases in $[a,\infty)$. Therefore, ${g(x)\le g(a)=0}$.
\end{proof}

An easy calculation gives the following second-order and third-order central moments of the frequency of the neutral bit position in PBIL and the cGA.
\begin{lemma}
For PBIL, we have
\begin{equation*}
\begin{split}
&\Var[p_t\mid p_{t-1}]=\frac{\rho^2}{\mu}p_{t-1}(1-p_{t-1}),\\
&E[(p_t-E[p_t\mid p_{t-1}])^3\mid p_{t-1}] \\
&\quad = \frac{\rho^3}{\mu^2}p_{t-1}(1-p_{t-1})(1-2p_{t-1}).
\end{split}
\end{equation*}
For the cGA, we have
\begin{equation*}
\begin{split}
&\Var[p_t\mid p_{t-1}]=\frac{2}{K^2}p_{t-1}(1-p_{t-1}),\\
&E[(p_t-E[p_t\mid p_{t-1}])^3\mid p_{t-1}]=0.
\end{split}
\end{equation*}
\label{lem:moments}
\end{lemma}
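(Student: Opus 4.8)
The plan is to treat PBIL and the cGA separately, in each case writing the centred increment $p_t - E[p_t\mid p_{t-1}]$ explicitly in terms of the underlying Bernoulli samples and then expanding the square and the cube. No deep idea is needed; everything reduces to elementary moment computations for sums of independent centred variables, together with the moments of a $\{0,1\}$ (resp.\ three-valued) random variable.

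For PBIL, conditionally on $p_{t-1}$ the samples $X_{1,1}^t,\dots,X_{\mu,1}^t$ are independent $\{0,1\}$ variables with mean $p_{t-1}$. Setting $Y_i = X_{i,1}^t - p_{t-1}$, the update rule gives $p_t - E[p_t\mid p_{t-1}] = \tfrac{\rho}{\mu}\sum_{i=1}^{\mu} Y_i$, where the $Y_i$ are independent, centred, with $E[Y_i^2] = p_{t-1}(1-p_{t-1})$ and $E[Y_i^3] = p_{t-1}(1-p_{t-1})(1-2p_{t-1})$; the last identity is the standard third central moment of a Bernoulli variable, obtained by the one-line expansion $E[(X-p)^3] = p - 3p^2 + 2p^3$ using $X^k = X$ for $X\in\{0,1\}$. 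Then $\Var[p_t\mid p_{t-1}] = \tfrac{\rho^2}{\mu^2}\sum_{i} E[Y_i^2] = \tfrac{\rho^2}{\mu}p_{t-1}(1-p_{t-1})$. For the third moment I would expand $\big(\sum_i Y_i\big)^3$ and note that independence and $E[Y_i]=0$ annihilate every mixed term ($E[Y_i^2 Y_j] = E[Y_i^2]E[Y_j] = 0$ for $i\ne j$, and $E[Y_i Y_j Y_k] = 0$ for distinct $i,j,k$), so only $\sum_i E[Y_i^3]$ remains; multiplying by $\rho^3/\mu^3$ yields $\tfrac{\rho^3}{\mu^2}p_{t-1}(1-p_{t-1})(1-2p_{t-1})$.

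For the cGA we have $\lambda = 2$, and conditionally on $p_{t-1}$ the increment $p_t - p_{t-1}$ takes the value $+\tfrac1K$ on the event $X_{1,1}^t > X_{2,1}^t$, which has probability $p_{t-1}(1-p_{t-1})$, the value $-\tfrac1K$ on the event $X_{1,1}^t < X_{2,1}^t$, of the same probability, and $0$ otherwise (here I use that for the neutral first bit the order statistics of the two samples are just the samples themselves, as recorded in Section~\ref{sec:Notation}). Since $E[p_t\mid p_{t-1}] = p_{t-1}$, the conditional central moments coincide with the moments of this three-valued step, so $\Var[p_t\mid p_{t-1}] = \tfrac{1}{K^2}p_{t-1}(1-p_{t-1}) + \tfrac{1}{K^2}p_{t-1}(1-p_{t-1}) = \tfrac{2}{K^2}p_{t-1}(1-p_{t-1})$, and the third moment is $\tfrac{1}{K^3}p_{t-1}(1-p_{t-1}) - \tfrac{1}{K^3}p_{t-1}(1-p_{t-1}) = 0$ because the $\pm\tfrac1K$ steps are symmetric and equiprobable.

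I do not expect a genuine obstacle: the whole lemma is a routine calculation. The only two places worth a moment of attention are the vanishing of the mixed third-order terms in the PBIL computation, which is exactly where independence and centredness of the $Y_i$ enter, and the correct identification of the cGA step probabilities with the neutral-bit events $X_{1,1}^t > X_{2,1}^t$ and $X_{1,1}^t < X_{2,1}^t$, each occurring with probability $p_{t-1}(1-p_{t-1})$.
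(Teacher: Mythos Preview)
Your proposal is correct. For the cGA, your argument is identical to the paper's. For PBIL, the paper takes a slightly different route: it works with the raw binomial moments $E\big[(\sum_i X_{i,1}^t)^2\mid p_{t-1}\big]$ and $E\big[(\sum_i X_{i,1}^t)^3\mid p_{t-1}\big]$, then expands $\big(-\rho p_{t-1}+\tfrac{\rho}{\mu}\sum_i X_{i,1}^t\big)^2$ and $\big(-\rho p_{t-1}+\tfrac{\rho}{\mu}\sum_i X_{i,1}^t\big)^3$ via the binomial theorem and simplifies term by term. Your choice to center first, writing the increment as $\tfrac{\rho}{\mu}\sum_i Y_i$ with $Y_i=X_{i,1}^t-p_{t-1}$, is a cleaner organization: the independence-plus-zero-mean observation kills the cross terms in one stroke, so the third-moment computation collapses to $\tfrac{\rho^3}{\mu^3}\sum_i E[Y_i^3]$ without the page of algebra. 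Both arguments are elementary and equivalent in spirit; yours is simply shorter.
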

\begin{proof}
For PBIL, note that $\sum_{i=1}^{\mu}X_{i,1}^t \sim \Bin(\mu, p_{t-1})$. Thus we have
\begin{align*}
\Var{}&\left[ \sum_{i=1}^{\mu}X_{i,1}^t \,\middle|\, p_{t-1} \right] = \mu p_{t-1}(1-p_{t-1}),\\
E{}& \left[ \left(\sum_{i=1}^{\mu}X_{i,1}^t-E\left[\sum_{i=1}^{\mu}X_{i,1}^t \,\middle|\, p_{t-1} \right]\right)^3 \,\middle|\, p_{t-1}\right] \\
&={} \mu p_{t-1}(1-p_{t-1})(1-2p_{t-1}).
\end{align*}
Hence, recalling that $p_t=(1-\rho)p_{t-1}+\frac{\rho}{\mu}\sum\limits_{i=1}^{\mu}X_{i,1}^t$, and noting that centered moments are invariant to translations with constants and that constant scaling factors can be pulled out in the corresponding power, we have
\begin{align*}
\Var{}&[p_t\mid p_{t-1}]=\left(\frac{\rho}{\mu}\right)^2\Var\left[ \sum_{i=1}^{\mu} X_{i,1}^t\,\middle|\, p_{t-1}\right]\\
&={}\frac{\rho^2}{\mu}p_{t-1}(1-p_{t-1})
\end{align*}
and
\begin{align*}
E{}&[(p_t-E[p_t \mid p_{t-1}])^3\mid p_{t-1}] \\
&={} \left(\frac{\rho}{\mu}\right)^3 E\left[ \left(\sum_{i=1}^{\mu}X_{i,1}^t-E\left[\sum_{i=1}^{\mu}X_{i,1}^t \,\middle|\, p_{t-1} \right]\right)^3 \,\middle|\, p_{t-1}\right]\\
&={} \frac{\rho^3}{\mu^2}p_{t-1}(1-p_{t-1})(1-2p_{t-1}).
\end{align*}

For the cGA, we compute
\begin{align*}
\Var{}&[p_t\mid p_{t-1}]=E[(p_t-E[p_t\mid p_{t-1}])^2 \mid p_{t-1}]\\
={}&p_{t-1}(1-p_{t-1}) \left(\frac{1}{K}\right)^2+ p_{t-1}(1-p_{t-1})\left({-\frac{1}{K}}\right)^2\\
={}&\frac{2}{K^2}p_{t-1}(1-p_{t-1})
\end{align*}
and
\begin{align*}
E[({}&p_t-E[p_t\mid p_{t-1}])^3\mid p_{t-1}] \\
={}&p_{t-1}(1-p_{t-1}) \left(\frac{1}{K}\right)^3+ p_{t-1}(1-p_{t-1})\left({-\frac{1}{K}}\right)^3=0.
\qedhere
\end{align*}
\end{proof}

We are now ready to prove the following upper bounds for the hitting time of the absorbing states of the frequency of a neutral bit position. We consider EDAs without margins here, 
but since the time to reach an absorbing state is not smaller than the time to reach a margin value, we know that an upper bound on the hitting time of the absorbing states is also an upper bound for the time to hit a margin value when margins are used.

\begin{theorem}
Consider using an $n$-Bernoulli-$\lambda$-EDA to optimize some function $f$ with a neutral bit position. 
\begin{itemize}
  \item If the EDA is PBIL with $\rho < 1$, including the case of the $\lambda$-MMAS, then the following holds. Let $c\in (\tfrac{1}{2},\tfrac{1}{\sqrt 2})$. We say that the frequency $p_t$ of the neutral bit position \emph{runs away} from time $t$ onwards if 
  \begin{enumerate}
  \item $p_t \le c \frac{\rho}{\mu}$ and in all iterations $t' > t$ all samples have a zero in the neutral bit position, or
  \item $p_t \ge 1 - c \frac{\rho}{\mu}$ and in all iterations $t' > t$ all samples have a one in the neutral bit position.
  \end{enumerate}
  For $\tilde{T}$ denoting the first $t$ such that $p_t$ runs away from time $t$ on, we have $E[\tilde{T}] = O(\tfrac{\mu}{\rho^2})$.
  \item If the EDA is the UMDA, that is, PBIL with $\rho = 1$, then the first hitting time $T$ of the absorbing states $\{0, 1\}$ satisfies $E[T] = O(\mu)$.
  \item For the cGA, the expected first time to reach an absorbing state satisfies $E[T]=O(K^2)$.
  \end{itemize}
\label{thm:upper} 
\end{theorem}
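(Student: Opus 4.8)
The plan is to treat all three items with one potential, $h_t:=\min\{\sqrt{p_t},\sqrt{1-p_t}\}$, which is $\tfrac1{\sqrt2}$ at $p_t=\tfrac12$ and vanishes exactly at the absorbing states. The workhorse is Lemma~\ref{lem:z}: applied with $z=p_t$, $z_0=p_{t-1}\in(0,1)$, and combined with Lemma~\ref{lem:moments} after taking conditional expectation (the linear term vanishes since $E[p_t\mid p_{t-1}]=p_{t-1}$), it gives for the cGA
\[
E[\sqrt{p_t}\mid p_{t-1}]\le\sqrt{p_{t-1}}-\frac{1-p_{t-1}}{4K^2\sqrt{p_{t-1}}},
\]
the cubic term disappearing because the cGA step has third central moment zero; symmetrically $E[\sqrt{1-p_t}\mid p_{t-1}]\le\sqrt{1-p_{t-1}}-\frac{p_{t-1}}{4K^2\sqrt{1-p_{t-1}}}$. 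Since $h_t\le\sqrt{p_t}$ and $h_t\le\sqrt{1-p_t}$ always, while $h_{t-1}=\sqrt{p_{t-1}}$ for $p_{t-1}\le\tfrac12$ and $h_{t-1}=\sqrt{1-p_{t-1}}$ otherwise, this yields $E[h_{t-1}-h_t\mid\mathcal F_{t-1}]\ge\tfrac1{8K^2}$ whenever $p_{t-1}\in(0,1)$, i.e.\ whenever $h_{t-1}>0$. The additive drift theorem~\cite{HeY01} applied to $(h_t)$, whose zero set is exactly $\{p_t\in\{0,1\}\}$, then gives $E[T]\le 8K^2 h_0=O(K^2)$. I would present the cGA case first, as the vanishing third moment makes it essentially immediate.

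For PBIL the same cubic bound of Lemma~\ref{lem:z}, now retaining the nonzero third central moment $\tfrac{\rho^3}{\mu^2}p_{t-1}(1-p_{t-1})(1-2p_{t-1})$, produces
\[
E[\sqrt{p_t}\mid p_{t-1}]\le\sqrt{p_{t-1}}-\frac{\rho^2}{8\mu}\,\frac{1-p_{t-1}}{\sqrt{p_{t-1}}}\left(1-\frac{\rho(1-2p_{t-1})}{2\mu p_{t-1}}\right)
\]
and the symmetric bound for $\sqrt{1-p_t}$. The parenthesised correction is the crux: it can turn negative when $p_{t-1}$ is tiny, which is precisely why the statement only works outside a boundary layer of width $\Theta(\rho/\mu)$. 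For $p_{t-1}\ge c\tfrac\rho\mu$ with $c>\tfrac12$ we have $\tfrac{\rho}{2\mu p_{t-1}}\le\tfrac1{2c}<1$, so the correction is at least $1-\tfrac1{2c}>0$; together with $p_{t-1}\le\tfrac12$ (so $\tfrac{1-p_{t-1}}{\sqrt{p_{t-1}}}\ge\tfrac1{\sqrt2}$) this gives $E[h_{t-1}-h_t\mid\mathcal F_{t-1}]\ge\delta:=\Theta(\rho^2/\mu)$ for all $p_{t-1}\in[c\tfrac\rho\mu,1-c\tfrac\rho\mu]$, which is where the hypothesis $c\in(\tfrac12,\tfrac1{\sqrt2})$ enters. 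The additive drift theorem then shows that, from any starting frequency, the first time $\tau$ with $p_\tau\notin(c\tfrac\rho\mu,1-c\tfrac\rho\mu)$ satisfies $E[\tau]\le h_0/\delta=O(\mu/\rho^2)$.

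It remains to convert "the frequency has come within distance $c\rho/\mu$ of a boundary" into "the frequency runs away". If $p_\tau\le c\tfrac\rho\mu$ and every later iteration samples a zero in the neutral bit, then the frequency decays deterministically as $p_{\tau+j}=(1-\rho)^jp_\tau$, and the probability of this is $\prod_{j\ge0}(1-(1-\rho)^jc\tfrac\rho\mu)^\mu$; bounding each factor via $\ln(1-x)\ge-2x$ and using $\sum_{j\ge0}(1-\rho)^j=\tfrac1\rho$ shows this is at least a positive constant $q=q(c)$, and symmetrically for $p_\tau\ge1-c\tfrac\rho\mu$. Hence the frequency runs away from $\tau$ on with probability $\ge q$; if it does not, then after the first deviating sample the frequency lies again in $(c\tfrac\rho\mu,1-c\tfrac\rho\mu)$ or already in the target set, so the $O(\mu/\rho^2)$ bound on the time to re-reach the target set applies afresh. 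A standard restart argument — the number of trials is dominated by a geometric variable of success probability $q$, each trial costs $O(\mu/\rho^2)$ in expectation, and a Wald-type summation — then gives $E[\tilde T]=O(\mu/\rho^2)$, and in particular the $\lambda$-MMAS bound ($\mu=1$). For the UMDA one specialises $\rho=1$: the drift argument reaches $[0,\tfrac c\mu]\cup[1-\tfrac c\mu,1]$ in expected time $O(\mu)$, and since $\rho=1$ a single all-zero (all-one) iteration forces $p$ to $0$ (to $1$), so running away coincides with absorption up to one step, giving $E[T]=O(\mu)$.

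I expect the genuinely delicate part to be this last step for PBIL: identifying the threshold $c\rho/\mu$ below which the third-moment correction stops being controllable (hence the restriction on $c$), and the restart bookkeeping, which is unavoidable because a single excursion to the boundary region escapes only with constant probability. By contrast the cGA case should be short, essentially the potential computation plus one invocation of additive drift.
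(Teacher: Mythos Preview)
Your approach is essentially the paper's: the same potential $\sqrt{\min\{p_t,1-p_t\}}$, the same use of Lemmas~\ref{lem:z} and~\ref{lem:moments} to extract an additive drift of order $\rho^2/\mu$ (resp.\ $1/K^2$) outside the boundary layer, the same invocation of the additive drift theorem, and the same phase/restart structure for PBIL. Your bound on the running-away probability via $\ln(1-x)\ge-2x$ applied directly to the infinite product is in fact a little cleaner than the paper's block-of-$\lceil 1/\rho\rceil$-iterations argument. The cGA and UMDA cases are correct as written.

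There is, however, one genuine gap in the PBIL restart bookkeeping. An unsuccessful phase has two parts: the time $A$ to reach $P=[0,c\rho/\mu]\cup[1-c\rho/\mu,1]$, which you bound by $O(\mu/\rho^2)$ via drift, and the time $B$ spent \emph{inside} $P$ before the first deviating sample. Your Wald sum needs a uniform bound on $E[B\mid\text{phase fails}]$, and you do not supply one. This is not a formality: once $p_\tau\le c\rho/\mu$, the per-iteration chance of sampling a one decays geometrically as roughly $\mu p_\tau(1-\rho)^{j}$, so the crude numerator $\sum_{s\ge1}s\Pr[X=s]\le\mu p_\tau/\rho^2$ must be divided by $\Pr[\text{fail}]$, and that probability is itself small (of order $\mu p_\tau/\rho$, not a constant) when $p_\tau$ happens to be tiny. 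The paper computes both quantities explicitly and shows that the ratio is at most $2/\rho$, independent of $p_\tau$; only with this in hand does ``each trial costs $O(\mu/\rho^2)$ in expectation'' become justified and the Wald argument close.
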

\begin{proof}
Let $q_t=\min\{	p_t,1-p_t\}$ and $Y_t=\sqrt{q_t}$. Then $T=\min\{t\mid q_t=0\}$ and $\tilde{T}=\min\{t\mid q_t \le c\tfrac{\rho}{\mu}\}$. Due to the symmetry, we just discuss the case when $q_{t-1}=p_{t-1}$. Obviously, $p_{t-1}\le \tfrac{1}{2}$ in this case. Let us assume that $p_{t-1} > c\tfrac{\rho}{\mu}$. Using Lemma \ref{lem:z} with $z=p_t$ and $z_0=p_{t-1}$, we have
\begin{equation*}
\begin{split}
E[\sqrt{p_t}\mid p_{t-1}]
\le{}& E[Y_{t-1} |\, p_{t-1}]+\tfrac{1}{2}p_{t-1}^{-1/2}E[p_t-p_{t-1}  |\, p_{t-1}]\\
{}&-\tfrac{1}{8}p_{t-1}^{-3/2}E [(p_t-p_{t-1})^2\mid p_{t-1}] \\
{}&+ \tfrac{1}{16}p_{t-1}^{-5/2}E [(p_t-p_{t-1})^3\mid p_{t-1} ]
\end{split}
\end{equation*}
and thus
\begin{equation}
\begin{split}
E[Y_{t-1}-\sqrt{p_t}\mid {}&Y_{t-1}]
\ge{-\tfrac{1}{2}}p_{t-1}^{-1/2}E[p_t-p_{t-1}\mid p_{t-1}]\\
{}&+\tfrac{1}{8}p_{t-1}^{-3/2}E [(p_t-p_{t-1})^2\mid p_{t-1} ]\\
{}&-\tfrac{1}{16}p_{t-1}^{-5/2}E [(p_t-p_{t-1})^3\mid p_{t-1} ].
\end{split}
\label{eq:drift}
\end{equation}

We analyze PBIL first, which includes the UMDA. 
We start by showing that, regardless of $p_0$, the expected time to reach $p_t \in P \coloneqq [0,c\rho/{\mu}] \cup [1-c{\rho}/{\mu},1]$ is $O({\mu}/{\rho^2})$. Via Lemma~\ref{lem:moments}, we have
\begin{equation*}
\begin{split}
E[Y_{t-1}{}&-\sqrt{p_t}\mid Y_{t-1}]
\ge\frac{1}{8}p_{t-1}^{-3/2}\frac{\rho^2}{\mu}p_{t-1}(1-p_{t-1})\\
{}&-\frac{1}{16}p_{t-1}^{-5/2}\frac{\rho^3}{\mu^2}p_{t-1}(1-p_{t-1})(1-2p_{t-1})\\
={}&\frac{\rho^2}{16\mu}p_{t-1}^{-1/2}(1-p_{t-1}) \left(2-\frac{\rho}{\mu p_{t-1}}(1-2p_{t-1}) \right)\\
\ge{}&\frac{\rho^2}{16\mu}p_{t-1}^{-1/2}(1-p_{t-1})\left(2-\frac{1}{c}\right),
\end{split}
\end{equation*}
where the last estimate follows from $p_{t-1} \ge c\rho/\mu$ and from the fact that  $0<p_{t-1}\le \tfrac 12$ implies ${0 \le 1-2p_{t-1}<1}$. Since $p_{t-1}\le \tfrac{1}{2}$, we have ${p_{t-1}^{-1/2}(1-p_{t-1})\ge \tfrac{\sqrt 2}{2}}$. Hence $E[Y_{t-1}-\sqrt{p_t}\mid Y_{t-1}]\ge c_1\rho^2/\mu$, where $c_1=\tfrac{\sqrt2}{32}(2-\tfrac{1}{c})$. Using $q_t=\min\{p_t,1-p_t\}$, we have
\begin{equation}
E[Y_{t-1}-Y_t\mid Y_{t-1}]\ge E[Y_{t-1}-\sqrt{p_t}\mid Y_{t-1}]\ge c_1\rho^2/\mu.\label{eq:driftA}
\end{equation}
By artificially modifying the process $(Y_t)$ once it goes below $c\rho/\mu$, e.g., by defining $(\tilde{Y}_t)$ via $\tilde{Y}_t=Y_t$ if $Y_t\ge c\rho/\mu$ and $\tilde{Y}_t=0$ otherwise, we can ensure that we have a drift of $E[\tilde Y_{t-1} - Y_t \mid Y_{t-1} > 0] \ge c_1 \rho^2 / \mu$ until we reach zero. Such an artificial extension of a process beyond the region of interest, to the best of our knowledge, was in the theory of evolutionary algorithms first used in~\cite{DoerrHK11}. With this artificial extension we can now use the Additive Drift Theorem~\cite{HeY01} with target $\tilde{Y}_t = 0$ and $\tilde{Y}_0=\sqrt{\tfrac{1}{2}}$ and obtain that the expected time for the $\tilde{Y}$-process to reach or go below $\sqrt{c \rho / \mu}$, equivalently to the $p_t$ process reaching $P$, is at most $\frac{\tilde{Y}_0}{c_1 \rho^2/\mu}=\frac{16}{2-1/c} \mu/\rho^2=O(\mu/\rho^2)$. 
We note here that for the UMDA, that is, PBIL with $\rho=1$, the $p_t$ process reaching $P$ hits the absorbing states $\{0,1\}$ since $c\rho /\mu = c/\mu < 1/\mu$ and the frequencies are well-behaved. Hence, we have $E[T]=O(\mu)$ for the UMDA.

Now we continue to discuss the neutral frequency's behavior of PBIL once it has reached $P$. 
W.l.o.g. let $p_t \le c\rho/\mu$. Then the probability that all of the next $\mu\lceil 1/\rho \rceil$ samplings have a zero in the neutral bit position is at least
\begin{align*}
(1-p_t){}&^{\mu\lceil 1/\rho \rceil} \ge \left(1-\frac{c\rho}{\mu}\right)^{\mu\lceil 1/\rho \rceil} \\
\ge{}& \left(1-\frac{c\rho}{\mu}\right)^{\mu\frac{2}{\rho}} = \left(1-\frac{c\rho}{\mu}\right)^{2c\left(\frac{\mu}{c\rho}-1\right)}\left(1-\frac{c\rho}{\mu}\right)^{2c} \\
\ge{}& \exp(-2c)\left(1-2c\frac{c\rho}{\mu}\right) \ge \exp(-2c)(1-2c^2) > 0,
\end{align*}
where the second inequality uses $\lceil 1/\rho \rceil \le 2/\rho$ since $\rho \le 1$, the antepenultimate inequality uses the Bernoulli's inequality, the penultimate inequality uses $\mu \ge 1$ and $\rho \le 1$, and the last inequality uses $c < 1/\sqrt 2$. In this case, the frequency after these $\lceil 1/\rho \rceil$ iterations is
\begin{align*}
p_{t+\lceil 1/\rho \rceil}=(1-\rho)^{\lceil 1/\rho \rceil}p_t \le (1-\rho)^{1/\rho}p_t\le \frac{p_t}{e} \le \frac{c}{e}\frac{\rho}{\mu}. 
\end{align*}
Therefore, with a similar calculation, it is easy to see that the probability that all of the next $\mu\lceil 1/\rho \rceil$ samplings have a zero in the neutral bit position (from the $(t+\lceil 1/\rho \rceil+1)$-th iteration to the $(t+2\lceil 1/\rho \rceil)$-th iteration) is at least $(\exp(-2c)(1-2c^2))^{1/e}$, and ${p_{t+2/\rho}\le (c/e^2)(\rho/\mu)}$. A simple induction gives that the probability that all samplings have a zero in the neutral bit position from the $(t+{(n-1)\lceil 1/\rho \rceil}+1)$-th iteration to the $(t+{n\lceil 1/\rho \rceil})$-th iteration is at least $(\exp(-2c)(1-2c^2))^{1/e^{n-1}}$. Therefore, the probability that only zeros are sampled in the neutral bit position is at least
\begin{align*}
\prod_{i=0}^{\infty}(\exp(-2c){}&(1-2c^2))^{1/e^{i}} = (\exp(-2c)(1-2c^2))^{\sum_{i=0}^{\infty} \frac{1}{e^{i}}} \\
={}& (\exp(-2c)(1-2c^2))^{1/(1-e^{-1})}>0,
\end{align*}
where the last inequality uses $\exp(-2c)(1-2c^2) > 0$. 

Let us divide the run of the EDA into phases. The first phase starts with the first iteration, each subsequent phase starts with the iteration following the end of the previous phase. A phase ends when for the first time after reaching in this phase a $p_t$-value in $P$ an unexpected value is sampled in the neutral bit position. That is, when a one is sampled if the first $p_t$-value in $P$ is in $[0,c \frac{\rho}{\mu}]$ or when a zero is sampled when the first $p_t$-value is at least $1 - c \frac{\rho}{\mu}$. By the above, we know the following about these phases. 
We call a phase successful when it never samples the unexpected value, thus it will not end. From the above calculation, we know the success probability is at least $(\exp(-2c)(1-2c^2))^{1/(1-e^{-1})}$, which is a positive constant. 
Consequently, there is an expected constant number of phases, one of which is successful (namely the last). In each phase, successful or not, it takes an expected time of $O(\mu/\rho^2)$ until the frequency of the neutral bit position reaches a value in $P$. In the successful phase, the frequency then runs away. For the unsuccessful phases, we now show that the phase ends after an expected number of additional $O(1/\rho)$ iterations after reaching a frequency value in~$P$. 

Note that this means analyzing a run of the algorithm starting (in iteration $t+1$) with the neutral frequency $p_t$ in $P$, say w.l.o.g.\ in $[0,c \frac{\rho}{\mu}]$, conditional on the event that at some future time a one is sampled in this position. 

Let $U$ be the event that the phase under investigation is unsuccessful. Let $X \in \{1,2, \dots\}$ be minimal such that in iteration $t+X$ a one is sampled in the neutral bit position of a selected individual. Conditional on $U$, the random variable $X$ is well-defined (that is, finite). For $X=s$ to hold, in particular no one can be sampled in the iterations $t+1, \dots, t+(s-1)$, and this implies not sampling a one in iteration $t+(s-1)$ when the current value of the frequency is $p_t (1-\rho)^{s-1}$. Consequently, the expected length (number of iterations) of an unsuccessful phase is 
\begin{align}
  E[X \mid U] &= \sum_{s = 1}^\infty s \Pr[X = s \mid U] = \frac{1}{\Pr[U]} \sum_{s=1}^\infty s \Pr[X = s] \nonumber\\
  &\le \frac{1}{\Pr[U]} \sum_{s=1}^\infty s \mu p_t (1-\rho)^{(s-1)}\label{eq:sum}
\end{align}
using a union bound over the $\mu$ samples in iteration $t+(s-1)$.

To estimate this expectation, we first compute $\Pr[U]$. For any $k \in \N$, we have
\begin{align*}
  \Pr&[U]  \ge \Pr[X \le k] = 1 - \Pr[X > k]\\
  & = 1 - \prod_{i=1}^{k} \Pr[X > i \mid X > i-1]\\
  & = 1 - \prod_{i=0}^{k-1} \left(1 - p_t(1 - \rho)^i\right)^\mu \\
  & \ge 1 - \exp\left({-\mu p_t} \sum_{i=0}^{k-1}(1-\rho)^i\right)\\
  & = 1 - \exp\left({-\mu p_t} \frac{1 - (1-\rho)^{k}}{1 - (1-\rho)}\right)\\
  & \ge 1 - \left(1 - \frac 12 \mu p_t \frac{1 - (1-\rho)^{k}}{\rho}\right) = \mu p_t \frac{1 - (1-\rho)^{k}}{2 \rho}
\end{align*}  
using the well-known estimates $1+x \le \exp(x)$ valid for all $x \in \R$ and $\exp(-x) \le 1 - \frac x2$ valid for all $0 \le x \le 1$. Taking the supremum over all $k \in \N$, we obtain $\Pr[U] \ge \frac{\mu p_t}{2\rho}$. 

To estimate the infinite sum in~\eqref{eq:sum}, we first recall the elementary formula $\sum_{s=1}^\infty s x^s = \frac{x}{(1-x)^2}$ for $0 < x < 1$, which follows from computing $A := \sum_{s=1}^\infty s x^s  = x \sum_{s=1}^\infty (s-1) x^{s-1} + \sum_{s=1}^\infty x^s = xA + \frac{x}{1-x}$ and solving for $A$. From this, we obtain
\begin{align*}
  \sum_{s=1}^\infty s \mu p_t (1-\rho)^{(s-1)} = \mu p_t \frac{1}{\rho^2}
\end{align*}
and finally 
\begin{align*}
  E[X \mid U] & \le \frac{\mu p_t \frac{1}{\rho^2}}{\frac{\mu p_t}{2\rho}} = \frac{2}{\rho}.
\end{align*}
Consequently, an unsuccessful phase in total takes an expected number of $O(\mu / \rho^2) + O(1 / \rho) = O(\mu / \rho^2)$ iterations. 

By Wald's equation, recalling that we have an expected constant number of unsuccessful iterations, we see that the total time until the frequency of the neutral bit position runs away is $O(\mu / \rho^2)$ iterations. 

For the cGA, in a similar manner as in the first part of the analysis for PBIL, by Lemma~\ref{lem:moments}, equation~(\ref{eq:drift}) becomes
\begin{equation*}
\begin{split}
E[Y_{t-1}-{}&\sqrt{p_t}\mid Y_{t-1}]
\ge\frac{1}{8}p_{t-1}^{-3/2}\frac{2}{K^2}p_{t-1}(1-p_{t-1})\\
={}&\frac{1}{4}p_{t-1}^{-1/2}\frac{1-p_{t-1}}{K^2}
\ge \frac{1}{4}\frac{\sqrt2}{2}\frac{1}{K^2}=\frac{\sqrt 2}{8}\frac{1}{K^2}.
\end{split}
\end{equation*}
Hence,
\begin{equation*}
E[Y_{t-1}-Y_t\mid Y_{t-1}]\ge E[Y_{t-1}-\sqrt{p_t}\mid Y_{t-1}]\ge \tfrac{\sqrt 2}{8}/K^2.
\end{equation*}
Via the Additive Drift Theorem \cite{HeY01} and $Y_0=\sqrt{\tfrac{1}{2}}$, we know that the expected time for the $Y$-process to reach zero is at most $Y_0 / \tfrac{\sqrt 2}{8K^2} = 4K^2$. 
\end{proof}

We now briefly show that the upper bound proof can, under suitable assumptions, also be applied to CE with small modifications. Assume that the learning rate sequence $(\rho_t)$ has both supremum and infimum, and let ${\rho_{\sup}=\sup\{\rho_t \mid t\in \N\}}$ and ${\rho_{\inf}=\inf\{\rho_t \mid t\in \N\}}$. Consider the first generation when the frequency reaches ${\tilde{P} \coloneqq [0,c\rho_{\sup}/{\mu}] \cup [1-c{\rho_{\sup}}/{\mu},1]}$. Following similar arguments as above, we can obtain that the corresponding value in the right side of (\ref{eq:driftA}) becomes $c_1\rho_{\inf}^2/\mu$, and hence the expected reaching time is $O(\mu/\rho_{\inf}^2)$. 

For the neutral frequency's behavior once it has reached $P$, we discuss the case when there exists a positive constant $c'<2$ so that $\rho_{\sup}/\rho_{\inf} \le c'$. In this case, we refine $c\in (1/2, \sqrt{1/(2c')})$. Then  we can obtain that the probability that all samplings have a zero in the neutral bit position from the $(t+i\lceil 1/\rho_{\inf} \rceil+1)$-th iteration to the $(t+(i+1)\lceil 1/\rho_{\inf} \rceil)$-th iteration is at least 
\begin{align*}
\left(\exp\left({-\frac{2c\rho_{\sup}}{\rho_{\inf}}}\right)\left(1-\frac{2c^2\rho_{\sup}^2}{{\rho_{\inf}}}\right)\right)^{1/e^i}\\
\ge \left(\exp({-2cc'})(1-2c^2c')\right)^{1/e^i}>0
\end{align*}
for $i=0,1,\dots$, and the frequency after these $\lceil 1/\rho_{\inf} \rceil$ iterations is at most $c\rho_{\sup}/{(e^{i+1}\mu)}$. Hence, the probability that only zeros are sampled in the neutral bit position is at least
\begin{align*}
\left(\exp({-2cc'})(1-2c^2c')\right)^{1/(1-e^{-1})}>0.
\end{align*}
Similarly, we could calculate that an unsuccessful phase ends after an expected number of additional $O(\rho_{\sup} / \rho_{\inf}^2)$ iterations after reaching a frequency value in~$\tilde{P}$. Hence, for CE, the total time until the frequency of the neutral bit position runs away is $O(\mu / \rho_{\inf}^2)$ iterations.

We note that Corollary~\ref{cor:lower} and Theorem~\ref{thm:upper} give sharp bounds for several hitting times. For the UMDA without margins, the expected first time when the frequency of the neutral bit position is absorbed in 0 or 1 is $\Theta(\mu)$, and the corresponding hitting time is $\Theta(K^2)$ for the cGA. For PBIL without margins and any $c\in (1/2, 1/\sqrt 2)$, the expected first time that the frequency of the neutral bit position hits $c\rho/\mu$ or $1-c\rho/\mu$ is $\Theta({\mu}/{\rho^2})$. As discussed in the second paragraph in Section~\ref{sec:IntroEDA}, these results also hold for the hitting time of the margins $\{1/D, 1-1/D\}$ when running EDAs with such margins.

\section{Extending the Lower Bounds to Bit Positions with Preference: Domination Results}
\label{sec:lowerP}

In the previous Sections~\ref{sec:lower} and~\ref{sec:upper}, we discussed how fast neutral bit positions approach the boundaries of the frequency range. In many situations, e.g., for the benchmark functions \onemax or \leadingones, bit positions are not neutral, but are neutral or have a preference of one bit-value (here the value one). Precisely, we say some bit position, w.l.o.g., the first bit position, of the fitness function $f$ is neutral or prefers a one (we also say \emph{weakly prefers a one}) if and only if 
\[f(0,X_2,\dots,X_D) \le f(1,X_2,\dots,X_D)\] 
for all $X_2,\dots,X_D\in\{0,1\}$. We say that the bit position \emph{weakly prefers a zero} if $f(0,X_2,\dots,X_D) \ge f(1,X_2,\dots,X_D)$ for all $X_2,\dots,X_D\in\{0,1\}$. 

If seems natural that for a bit that weakly prefers a one, the time for its frequency to reach or go below a certain value satisfies the same lower bounds as proven for neutral bit positions, and an analogous statement should be true for bits that weakly prefer a zero. This is what we show in this section. 
%
%

To prove this result, we first establish the following dominance result, which we expect to be useful also beyond this work. It in particular shows that when comparing two runs of an EDA, the first one starting with a higher frequency in a neutral bit position than the second, then in the next generation the frequency in the first run stochastically dominates the one in the second run. This statement remains true if the position in the first run is not neutral, but weakly prefers ones. A simple induction extends this statement to all generations. While not important for our work, we add that we believe that the lemma below does not remain true when both functions can be such that the first bit weakly prefers a one, since other bits' contributions to the fitness should be considered as well. 
Also, simple examples show that our claim is false for the cGA without well-behaved frequencies.

\begin{lemma}
Consider using an $n$-Bernoulli-$\lambda$-EDA to optimize (i) some function~$f$ such that the first bit weakly prefers a one and (ii) some function~$g$ with the first bit being neutral. Assume that the first process is started with a frequency vector~$u^0$ and the second with a frequency vector~$v^0$ such that $u^0_i = v^0_i$ for $i=2, \dots, D$, and $u^0_1 \ge v^0_1$. Assume that in the case of the cGA, the well-behaved frequency assumption holds. 

Let $u^t$ and $v^t$ be the corresponding frequency vectors generated in the $t$-th generation. Then $u^t_1 \succeq v^t_1$ for all $t \in \N$.

Analogously, if $f$ is such that the first bit weakly prefers a zero and we start with $u^0_1 \le v^0_1$, then $u^t_1 \preceq v^t_1$ for all $t \in \N$.
\label{lem:domi}
\end{lemma}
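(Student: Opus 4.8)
The plan is to track only the first coordinate of the two frequency vectors and compare it, in both runs, to the canonical ``neutral'' chain. Write $\nu_p$ for the one-step law of the frequency of a neutral first bit: $\nu_p = \mathcal{L}\bigl((1-\rho)p + \tfrac\rho\mu \Bin(\mu,p)\bigr)$ for PBIL, and the analogous $p\pm\tfrac1K$ law for the cGA. The argument rests on three ingredients. \emph{(i)}~The map $p\mapsto\nu_p$ is stochastically non-decreasing; for PBIL this follows from the monotone coupling $\Bin(\mu,p)\preceq\Bin(\mu,p')$ composed with the increasing affine map $x\mapsto(1-\rho)p+\tfrac\rho\mu x$, and for the cGA it is a short CDF computation valid on the $1/K$-grid of reachable frequencies --- this is precisely where the well-behaved frequency assumption enters (without it, monotonicity of $\nu_p$ already fails near the boundary, matching the remark that the claim is false in that case). \emph{(ii)}~Since the first bit of $g$ is neutral, $(v^t_1)$ is literally this chain: $v^t_1\mid v^{t-1}_1\sim\nu_{v^{t-1}_1}$, irrespective of $g$ and of the other coordinates. \emph{(iii)}~(the crux) For run~1, conditioned on the \emph{entire} previous frequency vector $u^{t-1}$ and no matter what its coordinates $2,\dots,D$ are, the law of $u^t_1$ stochastically dominates $\nu_{u^{t-1}_1}$.

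Granting these, I would finish by induction on $t$. The base case $t=0$ is the deterministic inequality $u^0_1\ge v^0_1$. For the step, assume $u^{t-1}_1\succeq v^{t-1}_1$: by \emph{(iii)} and stability of stochastic domination under mixtures, $\mathcal{L}(u^t_1)$ dominates the law of ``draw $p$ from $\mathcal{L}(u^{t-1}_1)$, then sample $\nu_p$''; by \emph{(i)} and the induction hypothesis this in turn dominates ``draw $p$ from $\mathcal{L}(v^{t-1}_1)$, then sample $\nu_p$'', which by \emph{(ii)} equals $\mathcal{L}(v^t_1)$. The ``weakly prefers a zero'' half then comes for free by applying the ``weakly prefers a one'' half to $f$ and $g$ pre-composed with the first-bit flip $x_1\mapsto 1-x_1$, which preserves neutrality of $g$'s first bit and the well-behaved assumption, turns a zero-preference into a one-preference, exchanges the hypothesis $u^0_1\le v^0_1$ for $1-u^0_1\ge 1-v^0_1$, and replaces $(u^t_1,v^t_1)$ by $(1-u^t_1,1-v^t_1)$.

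The real work, and the main obstacle, is \emph{(iii)}. I would argue as follows. Fix $u^{t-1}$, set $p\coloneqq u^{t-1}_1$, draw the $\lambda$ offspring of generation~$t$, and condition on the ``tails'' $\vec Y=(Y_1,\dots,Y_\lambda)$, $Y_i\coloneqq(X^t_{i,2},\dots,X^t_{i,D})$. Then the bits $X^t_{i,1}$ are i.i.d.\ with $\Pr[X^t_{i,1}=1]=p$ and independent of $\vec Y$, so $\Sigma\coloneqq\{i:X^t_{i,1}=1\}$ is an ``i.i.d.-$p$'' random subset of $[\lambda]$ independent of $\vec Y$. Let $A=A(\vec Y)$ be the set of the $\mu$ indices of largest \emph{base} fitness $b_i\coloneqq f(0,Y_i)$ (ties broken by index), and let $\mathrm{Sel}$ be the set selected in run~1 under the true fitness $f(X^t_{i,1},Y_i)=b_i+X^t_{i,1}\delta_i$, $\delta_i\coloneqq f(1,Y_i)-f(0,Y_i)\ge 0$, with ties broken preferring a one in the first bit. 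The key deterministic observation is $\mathrm{Sel}\cap\Sigma^c\subseteq A\cap\Sigma^c$: if $i$ is selected and $X^t_{i,1}=0$, then every index ranked above $i$ under base fitness is also ranked above $i$ under true fitness (the nonnegative $\delta_j$ can only help $j$, and the tie-break is set up to agree), so fewer than $\mu$ indices beat $i$ under base fitness and hence $i\in A$. Since $|\mathrm{Sel}|=|A|=\mu$, this forces $|\mathrm{Sel}\cap\Sigma|\ge|A\cap\Sigma|=\sum_{i\in A}X^t_{i,1}$ pointwise, and the right-hand side has law $\Bin(\mu,p)$ because $|A|=\mu$ and $\Sigma\perp\vec Y$.

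For PBIL this already gives $\sum_{i\in\mathrm{Sel}}X^t_{i,1}\succeq\Bin(\mu,p)$ and hence, via $x\mapsto(1-\rho)p+\tfrac\rho\mu x$, exactly $u^t_1\succeq\nu_p$. For the cGA ($\lambda=2$, $\mu=1$) one extra step is needed because the increment $K(u^t_1-p)=X^t_{w,1}-X^t_{\ell,1}$ (winner minus loser in the first bit) is signed: on the event that the two samples differ in the first bit it equals $2|\mathrm{Sel}\cap\Sigma|-1\ge 2|A\cap\Sigma|-1$, and on this event $2|A\cap\Sigma|-1$ is uniform on $\{-1,1\}$ --- the same law as the neutral increment $X^t_{1,1}-X^t_{2,1}$ --- while on the complementary event both increments vanish; hence $u^t_1\succeq\nu_p$ here too. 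The only genuinely fiddly points are fixing a tie-breaking convention under which $\mathrm{Sel}\cap\Sigma^c\subseteq A\cap\Sigma^c$ holds verbatim (any convention preferring a one in the first bit works, and such fitness ties can only involve individuals with $\delta_i=0$ anyway), and, for the cGA, keeping the whole comparison on a single $1/K$-grid, which is exactly what well-behavedness buys.
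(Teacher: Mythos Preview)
Your proof is correct and follows the same high-level strategy as the paper (one-step domination plus an induction via coupling/mixtures), but the decomposition and the level of detail differ in ways worth noting. For PBIL the paper simply asserts that for a bit weakly preferring a one the number of selected ones stochastically dominates $\Bin(\mu,u^{t-1}_1)$ (``a closer inspection of the update rule shows\ldots'') and then combines this with $\Bin(\mu,u^0_1)\succeq\Bin(\mu,v^0_1)$; you actually \emph{prove} this claim, and your inclusion $\mathrm{Sel}\cap\Sigma^c\subseteq A\cap\Sigma^c$ is the clean combinatorial core the paper leaves implicit. For the cGA the paper takes a different route: it compares $u^1_1$ and $v^1_1$ directly by a three-case CDF computation, using that on the grid $u^0_1\ge v^0_1$ forces $u^0_1=v^0_1$ or $u^0_1\ge v^0_1+1/K$. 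Your route via monotonicity of the neutral kernel $p\mapsto\nu_p$ on the $1/K$-grid together with your point~(iii) is a genuinely different (and more modular) decomposition; its payoff is a unified treatment of PBIL and cGA under one three-ingredient scheme, whereas the paper's direct CDF argument is shorter for the cGA in isolation but does not reuse the PBIL machinery.

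One minor remark: your insistence on a tie-break ``preferring a one in the first bit'' is stronger than you need. The inclusion $\mathrm{Sel}\cap\Sigma^c\subseteq A\cap\Sigma^c$ already holds under any fixed-index or uniformly random tie-break: if $X_{i,1}=0$ and $j$ ranks above $i$ in base fitness (ties broken by index), then the nonnegative bonus $X_{j,1}\delta_j$ can only increase $j$'s true fitness, so $j$ still ranks above $i$ under true fitness with the \emph{same} tie-break. This matches the paper, which does not invoke any special tie-breaking convention either.
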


\begin{proof}
We only show the result for weak preference of a one as the other statement can be shown in an analogous fashion or by regarding $({-f}, {-g}, 1-u, 1-v)$ instead of $(f,g,u,v)$. 

We first show the claim for the first iteration and later argue that an easy induction shows it for any time $t$.

For PBIL (or CE), we recall from Section~\ref{sec:Notation} that in the second process in an iteration $t$ started with a frequency $v^{t-1}_1$ of the neutral first bit of $g$, the next frequency of this neutral position is distributed as 
\begin{equation}
(1-\rho) v^{t-1}_1 + \rho \frac 1 \mu Y,\label{eq:pbilupdateneutral}
\end{equation}
where $Y \sim \Bin(\mu,v^{t-1}_1)$. In the first process, a closer inspection of the update rule~\eqref{eq:pbilupdate} shows the frequency of the position weakly preferring a one changes from $u^{t-1}_1$ to 
\begin{equation}
u_1^{t} \sim (1-\rho) u^{t-1}_1 + \rho \frac 1 \mu X,\label{eq:pbilupdateprefone}
\end{equation}
where $X \succeq \Bin(\mu,v^{t-1}_1)$. 

If $u^0_1 \ge v^0_1$, then $\Bin(\mu,u^0_1)$ stochastically dominates $\Bin(\mu,v^0_1)$, and hence $u^1_1 \succeq v^1_1$ by~\eqref{eq:pbilupdateneutral} and~\eqref{eq:pbilupdateprefone}.

For the cGA with the well-behaved frequency assumption, we note that $u^0_1 \ge v^0_1$ implies $u^0_1 = v^0_1$ or $u^0_1 \ge v^0_1 + 1/K$. We only regard the latter, more interesting case. We show $u^1_1 \succeq v^1_1$ using the definition  of domination, that is, that ${\Pr[u^1_1 \le \lambda] \le \Pr[v^1_1 \le \lambda]}$ holds for all $\lambda \in \R$. We discuss differently the following three cases.
\begin{itemize}
\item Assume $\lambda <v^0_1$. Since $u^0_1 - 1/K \ge v^0_1 > \lambda$ from our assumption, we have ${\Pr[u^1_1 \le \lambda] = 0 \le \Pr[v^1_1 \le \lambda]}$.
\item Assume $v^0_1 \le \lambda < u^0_1$. In this case, $\Pr[u^1_1 \le \lambda] \le u^0_1 (1-u^0_1) \le \tfrac 14$ and $\Pr[v^1_1 \le \lambda] = 1 - v^0_1 (1-v^0_1) \ge 1 - \tfrac 14$, which gives the claim.
\item Assume $\lambda \ge u^0_1$. Since $v^0_1+1/K \le u^0_1 \le \lambda$ from our assumption, we have $\Pr[v^1_1 \le \lambda] = 1 \ge \Pr[u^1_1 \le \lambda]$.
\end{itemize}
Hence, we have $u^1_1 \succeq v^1_1$.

To extend our proof to arbitrary generation $t$, we note that if we have $u_1^{t-1} \succeq v_1^{t-1}$, then (see, e.g., \cite[Theorem~12]{Doerr19tcs}) we can find a coupling of the two probability spaces describing the states of the two algorithms at the start of iteration $t$ in such a way that for any point $\omega$ in the coupling probability space we have $u_1^{t-1} \ge v_1^{t-1}$. Conditional on this $\omega$, we can use the above argument for one iteration and obtain $u_1^{t} \succeq v_1^{t}$. This implies that we also have $u_1^{t} \succeq v_1^{t}$ without conditioning on an $\omega$.
\end{proof}

From Lemma~\ref{lem:domi}, we now easily derive that our lower bounds shown in Section~\ref{sec:lower}, suitably adjusted, also hold for bits that weakly prefer one value. Theorem~\ref{thm:lowerP1} discusses the case when a bit weakly prefers a one. 
%
%
%

\begin{theorem}\label{thm:lowerP1}
  Consider using an $n$-Bernoulli-$\lambda$-EDA to optimize some function $f$ with a bit weakly preferring a one. Let $p_t, t = 0, 1, 2, \dots$ denote the frequency of this position after iteration $t$. Let $T_0 = \min\{t \mid p_t \le \tfrac 14\}$ denote the first time this frequency is in $[0,\frac 14]$. 
  
  \begin{enumerate}
  \item Let the EDA be PBIL with learning rate $\rho$ and selection size $\mu$. Then  $E[T_0]=\Omega(\tfrac{\mu}{\rho^2})$, in particular, $E [T_0]=\Omega({\mu})$ for the UMDA and $E [T_0 ]=\Omega(\tfrac{1}{\rho^2})$ for the $\lambda$-MMAS. Again for PBIL, for all $\gamma > 0$ and $T \in \N$ we have
  \[\Pr[\forall t \in [0..T] : p_t > \tfrac 12 - \gamma] \ge 1 - 2 \exp\left({-\frac{\gamma^2 \mu}{2\rho^2 T}}\right).\]
  \item Let the EDA be the cGA with hypothetical population size $K$. Then $E[T_0]=\Omega(K^2)$ and for all $\gamma > 0$ and $T \in \N$ we have
  \[\Pr[\forall t \in [0..T] : p_t > \tfrac 12 - \gamma] \ge 1 - 2 \exp\left({-\frac{\gamma^2 K^2}{2 T}}\right).\]
  \end{enumerate} 
\end{theorem}

\begin{proof}
  Let $g$ be some function with first bit position truly neutral, let $\tilde p_t, t = 0, 1, 2, \dots$ denote the frequency of this position after iteration $t$, and let $\tilde T_0 = \min\{t \mid \tilde p_t \le \tfrac 14\}$ denote the first time this frequency is in $[0,\frac 14]$. Noting that $\tilde p_0 = p_0 = \tfrac 12$, we apply Lemma~\ref{lem:domi} and observe that $p_t \succeq \tilde p_t$ for all $t$. This together with Theorem~\ref{thm:tail} shows the tail bounds.
  
  From $p_t \succeq \tilde p_t$ for all $t$, we also deduce $T_0 \succeq \tilde T_0 \succeq \min\{t \mid p_t \in [0,\tfrac 14] \cup [\tfrac 34, 1]\} =: T_0'$ and thus $E[T_0] \ge E[T_0']$. By Corollary~\ref{cor:lower}, $T_0'$ satisfies the lower bounds we claim for the expectation of $T_0$, and so does $T_0$ itself.
\end{proof}

In an analogous fashion, we obtain Corollary~\ref{cor:lowerP0} the corresponding result for bits weakly preferring a zero.

\begin{corollary}\label{cor:lowerP0}
  Consider using an $n$-Bernoulli-$\lambda$-EDA to optimize some function $f$ with a bit weakly preferring a zero. Let $p_t, t = 0, 1, 2, \dots$ denote the frequency of this position after iteration $t$. Let $T_0 = \min\{t \mid p_t \ge \tfrac 34\}$ denote the first time this frequency is in $[\frac 34, 1]$. 
  
  \begin{enumerate}
  \item Let the EDA be PBIL with learning rate $\rho$ and selection size $\mu$. Then  $E[T_0]=\Omega(\tfrac{\mu}{\rho^2})$, in particular, $E [T_0]=\Omega({\mu})$ for the UMDA and $E [T_0 ]=\Omega(\tfrac{1}{\rho^2})$ for the $\lambda$-MMAS. Again for PBIL, for all $\gamma > 0$ and $T \in \N$ we have
  \[\Pr[\forall t \in [0..T] : p_t < \tfrac 12 + \gamma] \ge 1 - 2 \exp\left({-\frac{\gamma^2 \mu}{2\rho^2 T}}\right).\]
  \item Let the EDA be the cGA with hypothetical population size $K$. Then $E[T_0]=\Omega(K^2)$ and for all $\gamma > 0$ and $T \in \N$ we have
  \[\Pr[\forall t \in [0..T] : p_t < \tfrac 12 + \gamma] \ge 1 - 2 \exp\left({-\frac{\gamma^2 K^2}{2 T}}\right).\]
  \end{enumerate} 
\end{corollary}

We have just extended our previous lower bounds to the case of bit positions preferring a particular value. One may ask whether similar results can be obtained for upper bounds as well. Let us comment on this question. Let us, as in Theorem~\ref{thm:lowerP1} and its proof, denote by $p_t$ the frequencies of a position preferring a one and by $T_0$ the first time this frequency has reached or exceeded a particular value (e.g., $\tfrac 34$ or the upper boundary of the frequency range). Let us denote by $\tilde p_t$ and $\tilde T_0$ the corresponding random variables for a neutral bit position. Then again $p_t \succeq \tilde p_t$ implies $T_0 \preceq \tilde T_0$, so (informally speaking or made precise via a coupling argument) $p_t$ reaches the target not later than $\tilde p_t$. 

However, we do not have any good upper bounds on $\tilde T_0$, neither on its expectation nor in the domination sense. On the technical side, the reason is that we regarded the symmetric process $q_t = \min\{p_t, 1-p_t\}$ in Section~\ref{sec:upper}. The true reason is that also the process itself (when regarding a neutral bit position) is symmetric: With probability $\tfrac 12$ each, the first visit to a boundary is to $\tfrac 1D$ and to $1 - \frac 1D$. However, if the first visit is to $\tfrac 1D$, then it takes quite some time to reach $1 - \tfrac 1D$. Consequently, the distribution of the first hitting time of $1 - \tfrac 1D$ is not well concentrated, and consequently, its expectation might be significantly larger than the first hitting time of $\{\tfrac 1D, 1 - \tfrac 1D\}$. For this reason, we currently do not see how our domination arguments allow to deduce from our results on neutral bit positions reasonable upper bounds on hitting times of frequencies of positions with weak preferences. However, we expect that in most situations where bits with weak preferences occur, one would rather try to exploit the preference to show stronger upper bounds than in the neutral case. For this reason, trying to retrieve information from the neutral case might not be too interesting anyway.

\section{Discussion}
\label{sec:disc}

Just like classic evolutionary algorithms, EDAs are subject to genetic drift and this can, even when using margins for the frequency range, lead to a suboptimal performance. 

For several classical EDAs, this paper proved the first sharp estimates of the expected time the sampling frequency of a neutral bit position takes to leave the middle range $[\frac 13, \frac 34]$ or to reach the boundaries. These times, roughly speaking, are $\Theta(K^2)$ iterations for the cGA and $\Theta(\mu/\rho^2)$ iterations for PBIL (and consequently $\Theta(\mu)$ for its special case UMDA). 

These results are useful both to interpret existing performance results and to set the parameters right in future applications of EDAs. As an example of the former, we note that the recent work~\cite{LehreN19foga} shows that the UMDA with $c \log D \le \mu = o(D)$, $c$ a sufficiently large constant, with $\lambda \le 71 \mu$, and with the margins $1/D$ and $1 - 1/D$, has a weak performance of $\exp(\Omega(\mu))$ on the $D$-dimensional \textsc{DeceptiveLeadingBlocks} benchmark function. This runtime is at least some unspecified, but most likely large polynomial in $D$; it is super-polynomial as soon as $\mu$ is chosen super-logarithmic. For our work, we know that the expected time for the frequency of a neutral bit position to reach the boundaries is only $O(\mu)$ iterations. Since the \textsc{DeceptiveLeadingBlocks} function, similar to the classic \leadingones function, has many bit positions that for a long time behave like neutral, a value of $\mu = o(D)$ results in that a constant fraction of these currently neutral bit positions will have reached the boundaries at least once within the first $D$ iterations. Hence also without looking at the proof of the result in~\cite{LehreN19foga}, which indeed exploits the fact that frequencies reach the margins to show the weak performance, our results already indicate that the weak performance might be caused by the use of parameter values leading to strong genetic drift. 

For a practical use of EDAs, our tail bounds of Theorem~\ref{thm:tail} can be helpful. As a quick example, assume one wants to optimize some function via the cGA and one is willing to spend a computational budget of $F$ fitness evaluations. Since the cGA performs two fitness evaluations per iteration, this is equivalent to saying that we have a budget of $T = F/2$ iterations. From Theorem~\ref{thm:tail}(b), with $\gamma = 1/4$, and a simple union bound over the $D$ bit positions, we see that the probability that one of the (temporarily) neutral bit positions leaves the middle range $[\tfrac 14, \frac 34]$ is at most ${D \cdot 2 \exp({-\frac{\gamma^2 K^2}{2T}})}$. Consequently, by using a parameter value of $K \ge \frac 1\gamma \sqrt{F \ln(20D)}$, we obtain that with probability at least 90\% no neutral position leaves the middle range (and, with the results of Section~\ref{sec:lowerP}, no position that weakly prefers one bit value leaves the middle range into the opposite direction). Phrased differently, this means that within this time frame, only those positions approach the boundaries for which there is a sufficiently strong signal from the objective function. While this consideration cannot determine optimal parameters for each EDA and each objective function, it can at least prevent the user from taking parameters that are likely to give an inferior performance due to genetic drift. Since genetic drift has been shown to lead to a poor performance in the past, we strongly recommend to choose the parameters $K$ and $\mu$ large enough so that estimates based on Theorem~\ref{thm:tail} guarantee that positions without a fitness signal stay in the middle range. 

\section*{Acknowledgments}

This work was supported by a public grant as part of the Investissement d'avenir project, reference ANR-11-LABX-0056-LMH, LabEx LMH, in a joint call with Gaspard Monge Program for optimization, operations research and their interactions with data sciences.

This work was also supported by the Program for Guangdong Introducing Innovative and Enterpreneurial Teams (Grant No. 2017ZT07X386); Guangdong Basic and Applied Basic Research Foundation (Grant No. 2019A1515110177); Shenzhen Peacock Plan (Grant No. KQTD2016112514355531); the Program for University Key Laboratory of Guangdong Province (Grant No. 2017KSYS008); and the Science and Technology Innovation Committee Foundation of Shenzhen (Grant No. JCYJ20190809121403553). We thank Guangwen Yang, Haohuan Fu, and Xin Yao for their valuable support of this work.



%

\end{document}